\newtheorem{theorem}{Theorem}
\begin{document}

\title{Capacity-Aware Planning and Scheduling in Budget-Constrained Multi-Agent MDPs: A Meta-RL Approach}

\author{Manav Vora, Ilan Shomorony, Melkior Ornik 
\thanks{Manuscript received: May 20, 2025; Revised August 24, 2025; Accepted September 16, 2025.}
\thanks{This paper was recommended for publication by Chao-Bo Yan upon evaluation of the Associate Editor and Reviewers' comments. The work of M.V. and M.O. was supported by the Office of Naval Research (ONR) under Grant N00014-23-1-2505.
The work of I.S. was supported in part by the National Science Foundation (NSF) under Grant CCF-2046991.}%
\thanks{All authors are with the University of Illinois Urbana-Champaign, Urbana, IL 61801 USA
        {\tt\footnotesize mkvora2@illinois.edu, mornik@illinois.edu, ilans@illinois.edu}}%
\thanks{Digital Object Identifier (DOI): see top of this page.}
}

\markboth{IEEE Robotics and Automation Letters. Preprint Version. Accepted September 2025}
{Vora \MakeLowercase{\textit{et al.}}: Meta-RL for Constrained Multi-Agent MDPs}


\maketitle

\begin{abstract}
We study \emph{capacity- and budget-constrained multi-agent MDPs} (CB-MA-MDPs), a class that captures many maintenance and scheduling tasks in which each agent can irreversibly fail and a planner must decide \emph{(i)} \textit{when} to apply a restorative action and \emph{(ii)} \textit{which} subset of agents to treat in parallel.  The global budget limits the \emph{total} number of restorations, while the capacity constraint bounds the number of \emph{simultaneous} actions, turning naïve dynamic programming into a combinatorial search that scales exponentially with the number of agents.  
We propose a two-stage solution that remains tractable for large systems.  
First, a Linear Sum Assignment Problem (LSAP)-based grouping partitions the agents into $r$ disjoint sets ($r$\,=\,capacity) that maximise diversity in expected time-to-failure, allocating budget to each set proportionally.  
Second, a \textbf{meta-trained PPO} policy solves each sub-MDP, leveraging transfer across groups to converge rapidly.  
To validate our approach, we apply it to the problem of scheduling repairs for a large team of industrial robots, constrained by a limited number of repair technicians and a total repair budget. Our results demonstrate that the proposed method outperforms baseline approaches in terms of maximizing the average uptime of the robot team, particularly for large team sizes. Lastly, we confirm the scalability of our approach through a computational complexity analysis across varying numbers of robots and repair technicians.
\end{abstract}

\begin{IEEEkeywords}
Planning Under Uncertainty, Reinforcement Learning, Planning, Scheduling and Coordination
\end{IEEEkeywords}

\section{Introduction}\label{sec:intro}
\IEEEPARstart{M}{odern} production plants, data centres and energy networks operate fleets of independent units—robot arms \cite{borgi2017data}, electric-vehicle chargers \cite{Walraven16EV}, servers \cite{kibira2023degradation}—whose health degrades stochastically and must be restored through costly interventions.  Only a finite stock of parts or labour (\emph{budget}) and a limited number of parallel service slots (\emph{capacity}) are available at any time \cite{perlman2001setting}.  Such decision problems instantiate the constrained multi-agent MDP family \cite{DeNijs21Taxonomy}: each agent evolves independently until it reaches an absorbing failure state, while global constraints couple their actions.  Exact dynamic-programming or branch-and-bound methods quickly become infeasible because the admissible joint action set at each step contains $\binom{n}{r}$ subsets of agents \cite{toth2000optimization}.  Algorithms designed for \emph{budget-only} constraints \cite{vora2024solving, kalagarla2021sample} do not address the simultaneous-action limit imposed by capacity.

We present a two-stage framework that scales to thousands of agents.  First, a Linear Sum Assignment Problem (LSAP) partitioning step clusters the $n$ agents into exactly $r$ groups (where $r$ equals the capacity) by maximising diversity in expected time-to-failure.  The global budget is then apportioned proportionally across groups, converting the exponential joint action space into $r$ independent sub-MDPs.  Second, a \emph{meta-reinforcement-learning} stage trains a single Proximal Policy optimization (PPO) agent on a distribution of such sub-MDPs, enabling rapid adaptation to each group at run time. This two-step approach not only simplifies the capacity constraints by partitioning the agents into groups but also results in a more tractable learning process by focusing on smaller MDPs with reduced dimensionality, compared to a single large multi-agent MDP.

\textbf{Contributions.} (i) A formal model of capacity- and budget-constrained multi-agent MDPs with a single structural assumption—each agent has an absorbing failure state. (ii) An LSAP-based partitioning algorithm that transforms the exponential joint action space into $r$ parallel sub-MDPs. (iii) A meta-PPO solver that amortises learning across partitions and budget levels. (iv) Empirical evidence and a complexity analysis demonstrating scalability to large systems.

\section{Related Work}
\label{sec:prelims}
\subsection{Constrained Markov Decision Processes}\label{sec:cmdp}
A \emph{Constrained MDP} (CMDP) augments the classical MDP with one or more cumulative cost signals and requires any feasible policy to satisfy bounds on those costs \cite{altman2021constrained}.  Solvers typically rely on Lagrangian relaxations \cite{achiam2017cpo} or linear programming \cite{dolgov2005}; both scale poorly when many weakly coupled components share a \emph{joint} resource because the joint state space grows as \(\prod_{i=1}^{n}|\mathcal S^{i}|\) \cite{DeNijs21Taxonomy}.

\paragraph{Single-resource constraints}
Early scalable surrogates allocate a global budget across components either greedily or via convex relaxations \cite{vora2024solving}.  These methods assume that at most one resource couples the agents and that the constraint applies to the \emph{cumulative} number of costly actions.

\paragraph{Renewable budgets and multi-agent extensions}  
Consumption-MDPs handle resources that replenish over time (battery charge, fuel) but retain a single budget variable \cite{Blahoudek20consMDP}.  
In multi-agent settings, the same joint-budget coupling gives rise to \emph{constrained multi-agent MDPs} \cite{DeNijs21Taxonomy}; exact value iteration or MILP formulations remain feasible only for a few dozen agents, motivating the need for decomposition techniques explored in this paper.

\subsection{Capacity and Joint Budget–Capacity Constraints}\label{sec:capacity}
A \emph{capacity constraint} limits how many costly actions may be executed \emph{simultaneously}---e.g.\ at most \(r\) robots can be repaired in parallel. Linear-programming approximations for such limits exist \cite{haksar2018controlling} yet introduce \(O(|\mathcal{S}^i|^n|\mathcal{A}^i|^n)\) variables, becoming intractable for \(n\!\gg\!10\). Auction-based schedulers \cite{Gautier23MultiUnitAuction} allocate chance-constrained resources but require an inner planner for each bid—again a bottleneck at scale.  Our LSAP partitioning eliminates the \(\binom{n}{r}\) joint-action explosion by decomposing the problem into \(r\) sub-MDPs.
\subsection{Reinforcement Learning under Resource Constraints}\label{sec:rl}
Safe and budgeted RL algorithms enforce cumulative-cost limits by augmenting the reward with Lagrange multipliers \cite{achiam2017cpo} or by learning a two-dimensional $Q$-function that trades off return and cost \cite{carrara2019budgeted}.  
While effective for a \emph{single} resource, these methods assume the agent can choose any action in $\mathcal A$; they do not address the combinatorial action sets that arise when at most $r$ components may act in parallel.  
Attempts to embed such capacity constraints via action masking or enumeration quickly become impractical: the policy must explore $\binom{n}{r}$ joint moves, and training time scales super-linearly \cite{ray2019safetygym}.  

Multi-agent schedulers based on deep RL---for example DeepRM for datacentre job placement \cite{mao2016deeprm}---circumvent the explosion by restricting to small $n$ or by hand-crafting a fixed action order; performance degrades sharply once the fleet size grows beyond a few dozen units.

\smallskip
\noindent\textbf{Gap addressed.}  Prior CMDP, capacity-limited, and budgeted-RL approaches each tackle \emph{one} constraint but falter when both constraints coexist at scale.  The LSAP + meta-PPO framework proposed here is, to our knowledge, the first to handle joint budget–capacity limits for large systems with near-linear empirical runtime.
\section{Problem Formulation}\label{sec:prob}
In this work, we study capacity- and budget-constrained multi-agent Markov decision processes (CB-MA-MDPs).  
A CB-MA-MDP comprises \(n\) independent agents—e.g.\ robots or servers—whose states deteriorate stochastically and can be reset by a \emph{restorative action}.  
Two global resource limits couple the agents: (i) \emph{capacity}: at most \(r\) agents can be restored \emph{simultaneously} in any decision step, (ii) \emph{budget}: no more than \(B\) restorative actions may be executed over the entire planning horizon. At each step the planner must therefore select a subset of up to \(r\) agents from the \(n\) candidates, while ensuring the cumulative number of restorations never exceeds \(B\).  
The resulting joint-action space is combinatorial: the number of admissible action sequences grows super-exponentially in \(n\) and \(B\) \cite{papadimitriou1987complexity}, making naïve dynamic programming intractable.

\noindent\paragraph{State and action spaces}
For agent \(i\!\in\!\{1,\dots,n\}\) let the local state space be a finite set \(\mathcal S^{i}\) that contains a designated absorbing failure state \(s^{\mathcal F}\).  
The global state is the product \(s_k=(s^{1}_{k},\dots,s^{n}_{k})\in\mathcal S:=\prod_{i}\mathcal S^{i}\).  
Each agent chooses between two actions:  
\[
\mathcal A^{i}=\{d^{i},\,m^{i}\},
\]
where $d^i$ is an idle or ``do-nothing'' action and $m^i$ is a costly restoration action.
The joint action is \(a_k=(a^{1}_{k},\dots,a^{n}_{k})\in\mathcal A:=\prod_{i}\mathcal A^{i}\).

\noindent\paragraph{Transition kernel}
Because agents are physically independent, the global transition factorises:
\[
T(s_k,a_k,s_{k+1})=\prod_{i=1}^{n}T^{i}(s^{i}_{k},a^{i}_{k},s^{i}_{k+1}),
\]
where the local kernel \(T^{i}\) obeys only two structural rules:

\begin{enumerate}
\item \textbf{Absorption:} \(T^{i}(s^{\mathcal F},a,s^{\mathcal F})=1\) for all \(a\in\mathcal A^{i}\);
\item \textbf{Restoration:} if \(a^{i}_{k}=m^{i}\) then all probability mass is shifted to states whose \emph{expected time to absorption}---computed under the idle action only and assumed finite---is greater than or equal to that of \(s^{i}_{k}\).
\end{enumerate}

\paragraph{Cost model and shared resources}
Executing \(d^{i}\) is free, \(c_{d^{i}}=0\).  
Restoring consumes one unit of the global \emph{budget} \(B\in\mathbb N_{0}\), i.e., incurs cost \(c_{m^{i}}=1\).  
At any discrete time step no more than \(r\) agents can be restored in parallel (\emph{capacity}).  
Formally, with indicator \(\mathbf 1_{m^{i}}(a^{i})\):
\begin{equation}
\sum_{k=0}^{\infty}\sum_{i=1}^{n}\mathbf 1_{m^{i}}(a^{i}_{k})\le B,\qquad
\sum_{i=1}^{n}\mathbf 1_{m^{i}}(a^{i}_{k})\le r\quad\forall k.\label{eq:constraints}
\end{equation}
\paragraph{Objective.}
Let \(t^{i}_{\mathrm{abs}}\) be the (random) hitting time of \(s^{\mathcal F}\) for agent \(i\).  
The planner seeks a policy \(\pi\) that maximises the worst-case expected lifetime across components:
\begin{equation}
\max_{\pi}\;\min_{i}\;\mathbb E_{\pi}\!\left[t^{i}_{\mathrm{abs}}\right],\label{eq:formulation}
\end{equation}
such that \eqref{eq:constraints} holds.

Because the budget is finite, an infinite planning horizon does not affect optimality; in experiments we truncate at a large \(H\) for practical evaluation.

\begin{figure*}[!htbp]
    \centering
    \includegraphics[width=\linewidth]{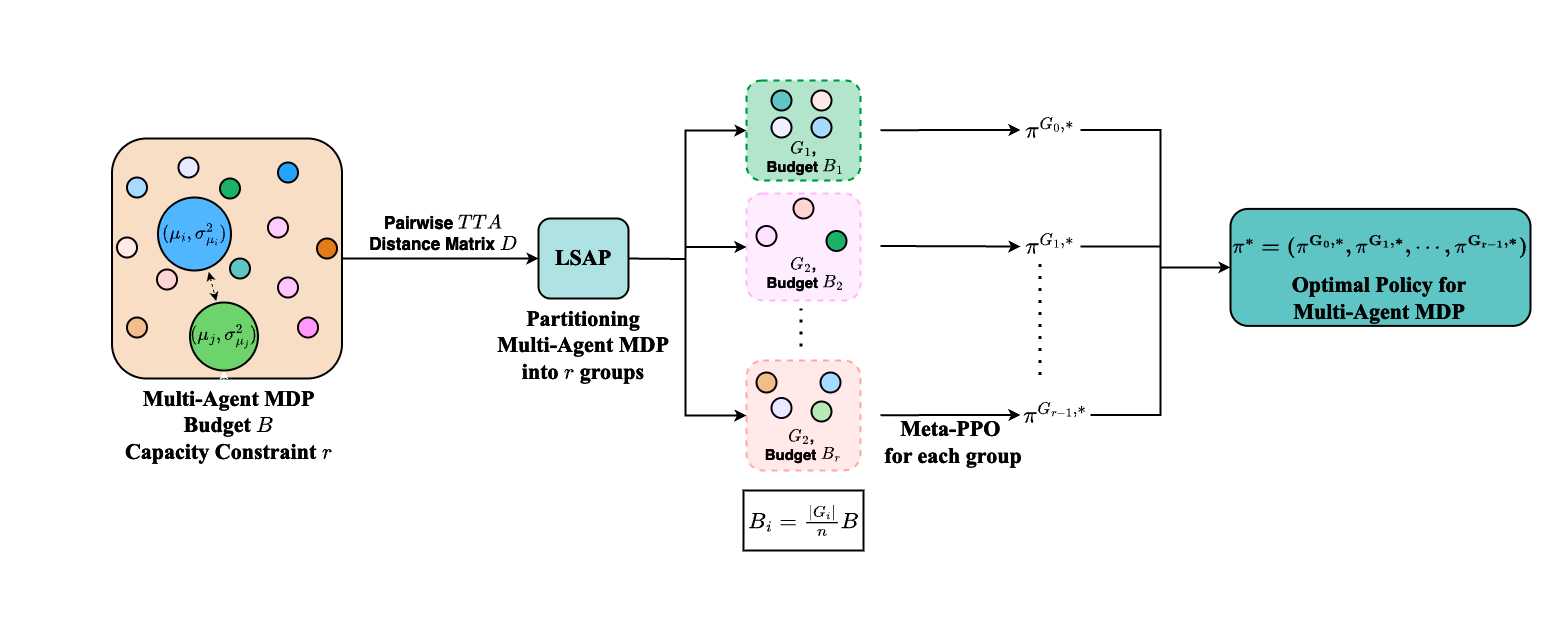}
    \caption{Architectural overview of the proposed approach.}
    \label{fig:arch}
\end{figure*}
\section{Methodology}\label{sec:solution}
Our solution is a two-stage pipeline (Fig.~\ref{fig:arch}).  
\textbf{Stage 1.} We split the \(n\)-agent CB-MA-MDP into exactly \(r\) groups by solving a Linear Sum Assignment Problem (LSAP) followed by pair score sorting. The LSAP cost matrix, built from summary statistics of each local kernel \(T^{i}\), maximises diversity within groups; the resulting partitions have comparable aggregate dynamics, allowing the global budget to be divided proportionally by group size.  
\textbf{Stage 2.} A single meta-reinforcement-learning agent—trained on many group/budget pairs—adapts online to each partition and supplies an approximately optimal policy; composing these policies yields a solution for the original problem.  
The next subsections detail both stages.

\subsection{LSAP Partitioning with Pair-Score Sorting}\label{subsec:lsap}

Solving a CB-MA-MDP exactly via mixed-integer programming \cite{schrijver1998theory,floudas2005mixed} is infeasible for large \(n\) \cite{solozabal2020constrained}.  
We therefore decompose the problem into \(r\) independent \emph{sub-MDPs}.  
Each sub-MDP contains a disjoint subset of agents, inherits a proportional share of the global budget, and has capacity~\(1\).  
Because the global transition kernel factorises, sub-MDPs evolve independently once the partition is chosen, so they can be solved in parallel by the meta-RL stage.

\paragraph{Diversity-maximising assignment}
To construct a balanced partition we first solve a Linear Sum Assignment Problem (LSAP) \cite{burkard1999linear}.  
For each agent \(i\) we pre-compute the mean \(\mu_i\) and variance \(\sigma_{\mu_i}^{2}\) of its \emph{time-to-absorption} (TTA) under the idle policy; these scalar statistics capture how quickly the agent is expected to fail in the absence of restoring actions.  
We define a distance metric
\begin{equation}
D_{ij}\;=\;\sqrt{(\mu_i-\mu_j)^2+(\sigma_{\mu_i}^{2}-\sigma_{\mu_j}^{2})^{2}} \label{eq:cost_matrix}
\end{equation}
and set the LSAP cost matrix to \(C=-D\). The LSAP is formulated as \cite{crouse2016implementing}:
\begin{equation}
\begin{aligned}
    \min_{x} \quad & \sum_{i=1}^{n} \sum_{j=1}^{n} C_{ij} x_{ij} \\
    \text{s.t.} \quad & \sum_{j=1}^n x_{ij} = 1 \quad \forall i,
                      & \sum_{i=1}^n x_{ij} = 1 \quad \forall j, \label{eq:lsap}
\end{aligned}
\end{equation}
where $x_{ij} \in \{0,1\}$ is a binary decision variable indicating whether agent $i$ is assigned to position $j$. Minimising the total cost yields a permutation \(\pi\) that maximises aggregate TTA diversity, resulting in an assignment that separates agents with dissimilar TTA characteristics.

\paragraph{Pair-score sorting and round-robin split}
Each matched pair \((i,\pi(i))\) receives a \emph{pair score} \(s_i=D_{i,\pi(i)}\).  
We sort these scores in descending order (ties broken uniformly at random) and distribute the \emph{second} member of each pair into groups in round-robin fashion:
\[
G_k=\bigl\{\pi(i_{k+\ell r})\bigr\}_{\ell=0}^{\lfloor(n-k-1)/r\rfloor},\quad k=0,\dots,r-1,
\]
where \(i_0,i_1,\dots,i_{n-1}\) is the score-sorted index list.  
This method guarantees that highly dissimilar agents are placed in different groups, giving every sub-MDP a comparable mix of slow- and fast-failing agents. For squared–Euclidean features, the classic variance decomposition implies that increasing within-group scatter reduces between-group dispersion \cite{Jain1999DataClustering}, making groups’ aggregate failure behavior more alike. This uniform diversity delays simultaneous failures and justifies allocating the global budget to groups in proportion to their sizes.  
The result is \(r\) smaller CB-MA-MDPs, each with capacity \(1\). Because these sub-problems now have nearly identical failure profiles,
a single meta-RL policy transfers across groups with minimal
adaptation. For completeness, we also evaluated a simple variant that assigns whole LSAP pairs to groups in round-robin order (“pair–round–robin”); this method admits a per-group diversity lower bound, but empirically underperforms the above proposed scheme across all scenarios.

\subsection{Meta-RL for Each Sub-MDP}\label{subsec:metarl}

The LSAP stage (Section~\ref{subsec:lsap}) yields \(r\) independent sub-MDPs.  
We now learn an approximately optimal policy for each sub-MDP via meta-reinforcement learning (RL).

\paragraph{State}
Following the budgeted-MDP formalism \cite{vora2023welfare}, the residual budget is appended to the physical state.  
For group \(G_q\) with \(m_q\) agents and budget \(B_q\),
\[
s^{G_q}_k=\begin{bmatrix}
            s^{1}_k & \dots & s^{m_q}_k\\[2pt]
            B_k     & \dots & B_k
          \end{bmatrix}^{\!\top}\!,
\]
where \(s^{i}_k\) is the physical state of agent \(i \in G_q\) and \(B_k\le B_q\).  
The \emph{column-stacked} layout separates the budget row from the physical state rows. Empirically this improves credit assignment compared with a flat \([s^{1}_k,\dots,s^{m_q}_k,B_k]\) vector \cite{lesort2018state}.

\paragraph{Action space}
Because each sub-MDP has capacity~1, the RL policy need only choose  
\(\texttt{NOOP}\) (idle) \emph{or} \(\texttt{REPAIR}(i)\) for exactly one \(i\in\{1,\dots,m_q\}\), i.e., $m_q + 1$ discrete actions.
This replaces the length-\(m_q\) Boolean action vector and enforces the capacity constraint by construction.

\paragraph{Reward}
\[
R(s_k,a_k)=
\begin{cases}
r_1 < 0 & \text{if } B_k<0,\\
-(H-k) & \text{if } \exists\,i:\,s^{i}_k=s^{\mathcal F},\\
k-\alpha\,s^{i}_k & \text{if } a_k=\texttt{REPAIR}(i),\\
k & \text{if } a_k=\texttt{NOOP},
\end{cases}
\]
where $0<\alpha<1$. The large penalty \(r_1\) discourages budget violations; the \(-(H-k)\) term penalises early absorption; the linear term \(k\) rewards survival; and the \(-\alpha s^{i}_k\) offset deters unnecessary restorations. A few principled alternatives to the above proposed heuristic reward function include (i) Lagrangian constrained-RL reward with a dual on budget/capacity costs \cite{achiam2017cpo} and (ii) an interior-point (log-barrier) pacing term with potential-based shaping reward \cite{boyd2004convex}; in ablations both performed comparably or slightly worse than our heuristic reward.

\paragraph{Meta-training and deployment.}
A single Proximal Policy optimization (PPO) policy \cite{schulman2017proximal} is meta-trained across a distribution of tuples \((G_q,B_q)\). During meta-training the PPO policy samples random \((G_q,B_q)\) tuples, rolls out \(H\) steps, and updates shared parameters similar to the meta-RL loop in $\text{RL}^2$ \cite{duan2016rl}.  
At deployment the learned network is cloned for each group and fine-tuned for a few gradient steps, yielding policies \(\pi^{G_q}_{\!*}\).  
The global controller applies
\[
\pi^\ast(s_k)=\bigl(\pi^{G_1}_{\!*}(s^{G_1}_k),\dots,\pi^{G_r}_{\!*}(s^{G_r}_k)\bigr),
\]
mapping local actions back to the original agent indices recorded during LSAP partitioning.  
Although \(\pi^\ast\) is not provably optimal, experiments confirm that it satisfies both budget and capacity limits and outperforms heuristic baselines on scenarios with up to 1000 agents.

\section{Implementation and Evaluation}\label{sec:experiments}

We evaluate our two-stage pipeline on a synthetic yet realistic maintenance scenario involving a large team of industrial robots.  The task is to synthesise an approximately optimal policy for a CB-MA-MDP with up to one thousand agents.  Performance is benchmarked against a suite of learning- and optimization-based baselines, and a complexity study quantifies scalability.

\paragraph{Scenario description}
The swarm contains \(n\) heterogeneous robots—assembly arms, pick-and-pack units, welding cells—serviced by \(r\) repair technicians.  Each robot’s health is represented by a \emph{Condition Index} (CI) that ranges from 100 (perfect) to 0 (failed) \cite{grussing2006condition}.  Motivated by infrastructure-deterioration studies \cite{grussing2006condition},
stochastic wear for each robot is modeled with a
\emph{discrete Weibull deterioration kernel}:
starting from CI \(h\!\in\!\{100,\dots,0\}\) the one-step
probability of landing in CI \(h'\le h\) is
\begin{equation*}
  P\bigl(\mathrm{CI}_{k+1}=h'\mid\mathrm{CI}_{k}=h\bigr)=\frac{f_{k_i,\lambda_i}(h-h'+1)}
       {\displaystyle\sum_{d=0}^{h} f_{k_i,\lambda_i}(d+1)}
\end{equation*}
where \(f_{k,\lambda}\) is the continuous Weibull probability density function \cite{kizilersu2018weibull}. Shape and scale parameters, $k_i, \lambda_i$, are drawn once per
agent from the practical ranges
\(k_i\!\sim\!\mathcal U[1,7]\) and \(\lambda_i\!\sim\!\mathcal U[25,70]\)
(CI units); these intervals are typical for electromechanical components
\cite{Abernethy2006Reliability, Nelson2004LifeData}. At any decision step each technician may restore at most one robot to full CI at a cost of one budget unit.  The swarm is considered non-operational once \emph{any} robot reaches CI \(=0\).

\paragraph{Global constraints}
A mission budget of \(B\) repair units is available over a planning horizon of \(H=100\) steps, and at most \(r\) robots can be repaired simultaneously.  All robots start at CI \(=100\).  The planner must choose, at every step, which subset of robots to repair so as to maximise the expected time until the first failure while never exceeding the budget and capacity limits.

\paragraph{MDP formulation}
The problem instantiates the CB-MA-MDP defined in Section~\ref{sec:prob}.  Each agent’s state is its current CI.  For \(n=1000\) and \(r=300\) the exact mixed-integer formulation contains roughly \(10^{5}\) binary decision variables over \(H=100\) steps, illustrating the need for the scalable partition-and-meta-RL strategy evaluated below.

We report results across multiple \((n,r)\) pairs to assess solution quality, robustness, and runtime. For the policy-level performance analysis (Section~\ref{subsec:policyperf}), the budget is $B=10n$ units for each $(n,r)$ scenario.

\subsection{Partition-Quality Analysis}\label{subsec:swarm_partitioning}
Partitioning is \emph{not} part of the CB-MA-MDP definition; we introduce it purely as a scalability device.  
Any algorithm that optimises the full joint action space (e.g.\ the ILP baseline) becomes infeasible once \(n\gtrsim 10\) because the number of binary variables explodes.  
Our strategy is to solve many small, independent sub-MDPs instead.  
For the meta-PPO stage to generalise across groups and perform well, the partitions should (i) expose sufficient diversity \emph{within} each group and (ii) make the groups \emph{similar} to one another so that a single policy can transfer.  
The average in-group distance \(\bar d\) therefore serves as a proxy for downstream policy quality. We compare three partioning strategies:

\begin{itemize}[leftmargin=*]
\item \textbf{Random grouping} – each agent is assigned uniformly at random to one of the \(r\) groups.
\item \textbf{MILP diversity maximisation} – a mixed-integer linear program that assigns agents to groups by maximising the total in-group TTA distance \(\sum_{a<b}D_{ab}y_{ab}\).  
      Binary \(x_{ij}\) variables select the group for agent \(i\) (\(\sum_j x_{ij}=1\)); auxiliary \(y_{ab}\) indicate if agents \(a,b\) share a group.  
      The constraint matrix is totally unimodular, so we first solve the LP relaxation; if an integer gap remains after twice the LSAP runtime, the relaxed solution is greedily rounded, otherwise the exact optimum is returned.
\item \textbf{LSAP + pair-score sorting} (ours) – the algorithm of Section~\ref{subsec:lsap}, solved with the Jonker–Volgenant assignment Python routine \texttt{linear\_sum\_assignment} \cite{crouse2016implementing}.
\end{itemize}
\definecolor{bestcell}{RGB}{224,247,255}
\newcommand{\best}[1]{\cellcolor{bestcell}\textbf{#1}}
\begin{table}[!htbp]
\centering
\small
\setlength{\tabcolsep}{4pt}
\renewcommand{\arraystretch}{1.12}
\begin{tabular}{@{}ccccc@{}}
\toprule
\(n\) & \(r\) &
\textbf{Random} & \textbf{MILP} & \textbf{LSAP}\\
\midrule
10   & 3   & 31.15 & \best{39.33} & 38.40  \\
20   & 5   & 54.31 & \best{67.41} & 64.47 \\
50   & 15  & 59.50 & 31.15 & \best{66.00} \\
100  & 25  & 67.15 & -- & \best{74.35}\\
500  & 150 & 75.77 & -- & \best{77.50}\\
1000 & 300 & 76.46 & -- & \best{76.86}\\
\bottomrule
\end{tabular}
\caption{Average in-group TTA diversity \(\bar d\) (higher $\uparrow$ is better).  
“--’’ denotes that the MILP failed to return a feasible solution.  
The highest diversity per row is shaded.}
\label{tab:partition}
\end{table}
\paragraph{Evaluation metric}
For each partition we compute the average in-group distance  
\[
\bar d=\frac1r\sum_{q=1}^{r}\;\;
        \frac{1}{|G_q|(|G_q|\!-\!1)}
        \sum_{\substack{i,j\in G_q\\i<j}}\!D_{ij}.
\]
 Larger \(\bar d\) means each group contains a more heterogeneous set of agents, which in turn yields groups with similar aggregate failure profiles.
\paragraph{Experimental protocol}
Means \(\mu_i\) and variances \(\sigma_{\mu_i}^2\) are estimated from \(10^{3}\) Monte-Carlo runs per agent.  
The MILP is solved with GUROBI 10.0; if no integer solution is obtained within \textbf{twice the LSAP + meta-PPO wall-clock time}, we report the LP-rounded solution.  
A “--’’ entry indicates that even the relaxation exceeded the time limit.
\definecolor{bestrow}{RGB}{224,247,255}
\begin{table*}[!htbp]
\centering
\small
\setlength{\tabcolsep}{4pt}
\renewcommand{\arraystretch}{1.13}
\begin{tabular}{@{}lcccccccc@{}}
\toprule
\multirow{2}{*}{\textbf{Method}} &
\multicolumn{2}{c}{$(n,r)$ = \((2,1)\)} &
\multicolumn{2}{c}{$(n,r)$ = \((10,3)\)} &
\multicolumn{2}{c}{$(n,r)$ = \((100,30)\)} &
\multicolumn{2}{c}{$(n,r)$ = \((1000,300)\)}\\
\cmidrule(lr){2-3}\cmidrule(lr){4-5}\cmidrule(lr){6-7}\cmidrule(l){8-9}
& $\bar{t}_{\mathrm{abs}}\!\pm\!\sigma$ & $\bar{U}\!\pm\!\sigma$
& $\bar{t}_{\mathrm{abs}}\!\pm\!\sigma$ & $\bar{U}\!\pm\!\sigma$
& $\bar{t}_{\mathrm{abs}}\!\pm\!\sigma$ & $\bar{U}\!\pm\!\sigma$
& $\bar{t}_{\mathrm{abs}}\!\pm\!\sigma$ & $\bar{U}\!\pm\!\sigma$\\
\midrule
\textbf{ILP}                 & \best{46.1$\pm$3.4} & 19.5$\pm$2.6 &
33.4$\pm$8.4 & 92.2$\pm$23.3 & -- & -- & -- & --\\
\textbf{Vanilla PPO}         & 44.4$\pm$9.9 & 18.3$\pm$2.8 &
33.8$\pm$18.2 & 63.1$\pm$35.0 &
3.2$\pm$0.7 & 0.0$\pm$0.0 &
2.0$\pm$0.2 & 361.5$\pm$59.9\\
\textbf{GA}        & 41.1$\pm$1.5 & 20.0$\pm$0.0 &
31.5$\pm$10.5 & 86.1$\pm$26.9 &
3.1$\pm$1.1 & 93.0$\pm$34.1 &
1.0$\pm$0.0 & 300.0$\pm$0.0\\
\textbf{Auction Heuristic}             & 42.4$\pm$4.5 & 19.5$\pm$2.4 &
33.5$\pm$9.6 & 90.6$\pm$24.8 &
8.3$\pm$7.6 & 249.1$\pm$227.6 &
1.1$\pm$0.3 & 324.0$\pm$81.4\\
\textbf{RP-PPO}    & 44.4$\pm$9.9 & 18.3$\pm$2.8 &
72.3$\pm$23.9 & 27.1$\pm$5.5 &
37.7$\pm$16.0 & 25.8$\pm$9.4 &
32.6$\pm$15.5 & 23.5$\pm$10.3\\
\textbf{MP-PPO}      & 44.4$\pm$9.9 & 18.3$\pm$2.8 &
\best{80.6$\pm$24.0} & 28.0$\pm$7.3 & -- & -- & -- & --\\
\textbf{LSAP + Meta-PPO (Ours)}     & 44.4$\pm$9.9 & 18.3$\pm$2.8 &
\best{80.4$\pm$24.0} & 28.8$\pm$8.4 &
\best{80.8$\pm$21.0} & 28.8$\pm$7.5 &
\best{35.0$\pm$20.6} & 23.1$\pm$10.6\\
\bottomrule
\end{tabular}
\caption{Mean survival time \(\bar{t}_{abs}\) (higher $\uparrow$) and mean cumulative repairs \(\bar{U}\) under a fixed budget \(B=10n\).  Best \(\bar{t_{abs}}\) in each scenario is highlighted; “--’’ denotes that ILP / MILP failed to return a feasible solution within twice the runtime of LSAP + meta-PPO.}
\label{tab:policy_perf}
\end{table*}
\begin{figure}[!htbp]
  \centering
  \includegraphics[width=\linewidth]{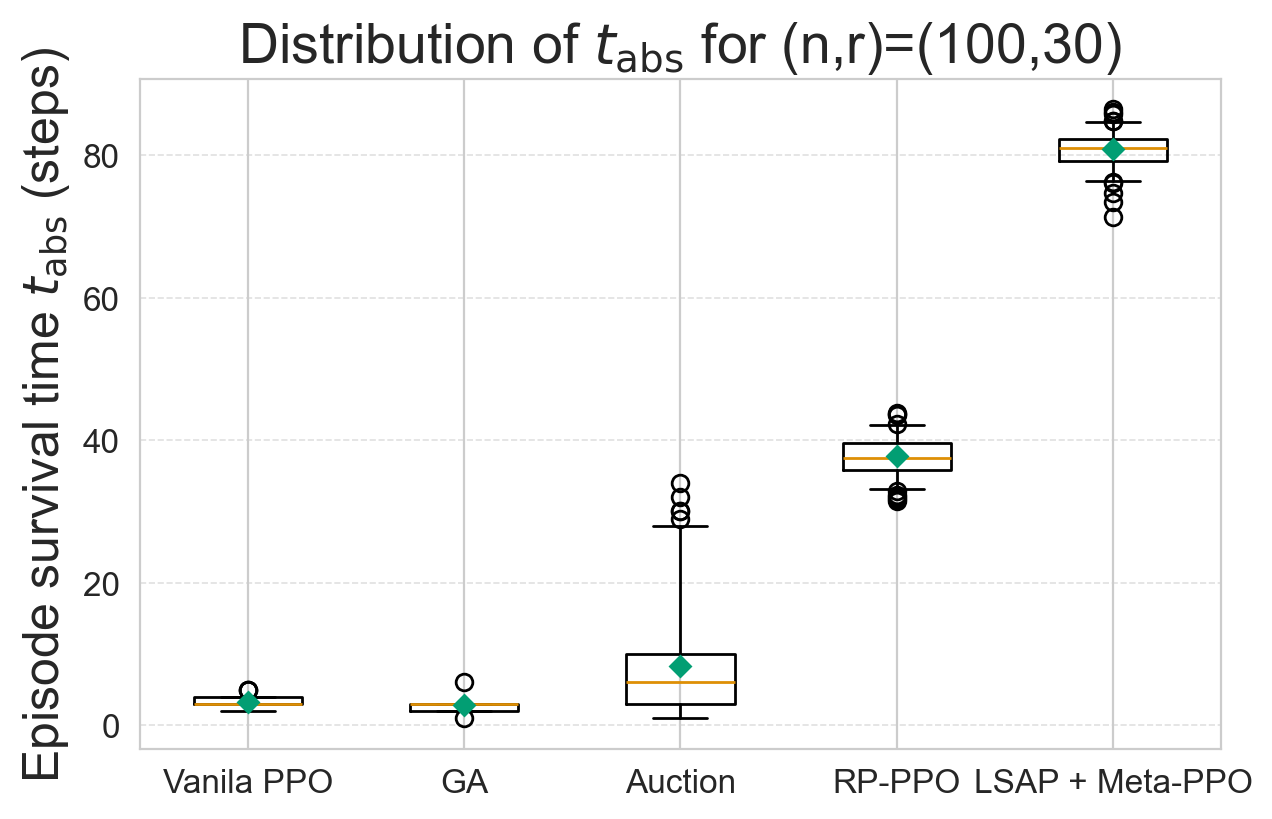}
  \caption{Distribution of episode survival time \(T_{\mathrm{abs}}\) for \((n,r)=(100,30)\) across methods (100 runs each). Boxes show the inter-quartile range, whiskers represent the 5--95th percentiles; markers denote means and circles denote outliers.}
  \label{fig:tabs_boxplot}
\end{figure}
\paragraph{Discussion}
LSAP outperforms the random grouping baseline in all scenarios and approaches the MILP optimum when the latter converges.  
For $(10,3)$ and \((20,5)\) the MILP attains the highest \(\bar d\) but requires twice the LSAP + meta-PPO runtime; for $(50,15)$ the LP–relaxed MILP does not reach integrality within the time limit. 
Our greedy rounding step therefore returns many \emph{singleton} groups (size 1), each contributing zero in-group distance, 
which explains the unexpectedly low diversity score.
Beyond \(n=100\) the MILP fails to find a feasible solution within the time limit \cite{schrijver1998theory}, while LSAP completes in milliseconds (see Table~\ref{tab:stepwise_complexity}) and achieves the highest \(\bar d\).


\textbf{Asymptotic intuition.}
Theorem~\ref{lem:partition_quality} in Appendix~\ref{subsec:proof} shows that \emph{if} distances were
i.i.d.\ Gaussian, any balanced partition becomes nearly as diverse as
any other once the group size $k$ grows faster than $\ln n$ for large $n$.
Although our actual TTA distances are computed from  Weibull parameters and therefore do not satisfy the assumptions of the theorem, the empirical results in
Table~\ref{tab:partition} display the same behavior: for $n{=}1000$ case, the diversity gap between
LSAP and random grouping shrinks to below 1 \%.  Thus the
concentration intuition, while not rigorously proved for our experimental setting, is observed empirically.

\subsection{Policy-Level Performance}\label{subsec:policyperf}

We compare the maintenance policies returned by our proposed approach with six baselines that span exact optimization, meta-heuristics, and learning without partitioning:
\begin{enumerate}[leftmargin=*]
\item \textbf{Integer Linear Programming (ILP)} – exact integer linear program for \eqref{eq:formulation} solved with GUROBI, with the same time cap as MILP (Section~\ref{subsec:swarm_partitioning}).  
\item \textbf{Vanilla PPO} – one PPO network trained on the full CB-MA-MDP.  
\item \textbf{Genetic Algorithm (GA)} \cite{Goldberg1989GA} – rank selection, two-point crossover, bit-flip mutation; fitness equals next-step total health.  
\item \textbf{Auction Heuristic} \cite{Dias2006TraderBots} – each agent bids a linear function of failure risk; the \(r\) highest bids receive repairs each step.  
\item \textbf{Random Partition + Meta-PPO (RP-PPO)} - robots are randomly assigned to groups, and a meta-PPO agent is then used to determine the repair policy, as described in Section~\ref{sec:rl}.  
\item \textbf{MILP Partition + Meta-PPO (MP-PPO)} – diversity-maximizing MILP (LP relaxation, rounding) followed by meta-PPO.  
\end{enumerate}

\emph{Evaluation metrics:}
Our primary metric is the \emph{survival time}
\(t_{\mathrm{abs}}\), the number of decision steps until the first
absorption (i.e., until any robot’s CI reaches~0); if no failure occurs
within the horizon \(H=100\), we set \(t_{\mathrm{abs}}=H\).
As a secondary, efficiency-oriented metric we report the
\emph{cumulative repairs}
\(U=\sum_{t=1}^{t_{\mathrm{abs}}}\sum_{i=1}^{n}\mathbf{1}\{a_i(t)=\text{repair}\}\).
Table~\ref{tab:policy_perf} presents the performance of the proposed approach and the six baselines. Every algorithm runs for 100 episodes on four scenarios: \((n,r)\in\{(2,1),(10,3),(100,30),(1000,300)\}\).  
ILP and MILP receive \emph{double} the wall-clock time consumed by our approach; if they fail to return a feasible policy the entry is “--”.  
We report the mean survival time \(\bar{t}_{abs}\) (higher is better) and the mean cumulative number of repairs \(\bar{U}\) over the 100-step horizon, averaged over 100 runs; \(\sigma(\cdot)\) denotes the unbiased sample standard deviation. A larger \(\bar{t}_{\mathrm{abs}}\) is always desirable;  
\(\bar{U}\) is interpreted \emph{relative} to \(\bar{t}_{\mathrm{abs}}\):  
a smaller $\bar{U}$ for a high $\bar{t}_{abs}$ indicates more efficient budget pacing,  
whereas a higher $\bar{U}$ for a small $\bar{t}_{abs}$ signals poor allocation. From a managerial perspective, for comparable $\bar t_{\mathrm{abs}}$, a smaller $\bar U$ indicates more even technician utilization and lower average overtime risk, whereas a large $\bar U$ accompanied by a small $\bar t_{\mathrm{abs}}$ suggests front-loaded or mistimed repairs that create budget spikes and staffing stress. Figure~\ref{fig:tabs_boxplot} visualizes the full distribution of
\(t_{\mathrm{abs}}\) for \((n,r)=(100,30)\), complementing
Table~\ref{tab:policy_perf}.

\noindent\emph{\textbf{Discussion:}} For the smallest scenario $(n,r) = (2,1)$ ILP attains the global optimum, as expected for such a small search space; baselines (b), (e) and (f) perform the same as our approach and none of them match the optimum. This performance similarity is because the (2,1) scenario doesn't require any partitioning.
Once the swarm grows beyond ten agents, ILP and MILP exceed the time budget, and vanilla PPO, GA, and Auction heuristics degrade sharply as the joint action space explodes. At $(10,3)$, the ILP solver timed-out, yielding a feasible but sub-optimal solution; with unlimited time we expect it to reach the optimum.
In contrast, both partition-then-PPO pipelines preserve high swarm operational time: MILP + PPO and our LSAP + PPO achieve comparable survival at \((10,3)\), but LSAP scales to \((100,30)\) and \((1000,300)\) whereas MILP times out.  At \(n=100\) LSAP prolongs operation by a factor of ten relative to the strongest non-partitioned heuristic (Auction), while requiring an order-of-magnitude fewer repairs. For the largest case \((1000,300)\) LSAP and Random partitioning yield similar performance.  Such a feature is indeed expected for large $n$, as previously discussed in Section~\ref{subsec:swarm_partitioning}.
\begin{figure}[!htbp]
  \centering
  \includegraphics[width=\linewidth]{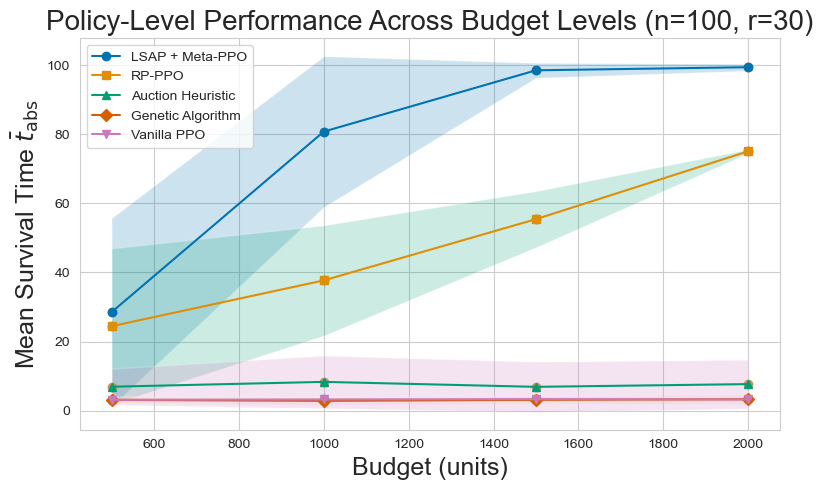}
  \caption{Budget sensitivity for \((n,r)=(100,30)\): mean survival time vs.\ total budget \(B\).  Shaded bands are $\pm \sigma$ over 100 runs; ILP/MILP not shown (timeout).}
  \label{fig:budget_sensitivity}
\end{figure}

\emph{\textbf{Budget sensitivity.}}
To verify that our choice \(B=10n\) does not unduly favor the proposed approach, we re-evaluate the methods on \((n,r)=(100,30)\) for budgets \(B\in\{5n,10n,15n,20n\}\).
Figure~\ref{fig:budget_sensitivity} reports mean survival (\(\pm\)1 standard deviation) over 100 runs. 
LSAP\,+\,meta-PPO increases $\bar{t}_{abs}$ monotonically with \(B\) and saturates near the 100-step horizon, whereas non-partitioned baselines (Vanilla PPO, GA, Auction) plateau at substantially lower values; even RP-PPO improves with budget but remains well below our method.
ILP and MILP are omitted in this plot because neither returned a feasible policy within the $2\times$(LSAP+meta-PPO) runtime cap. 
These trends demonstrate that our approach is robust across a wide range of budgets.
\begin{table}[!htbp]
    \centering
    \begin{tabular}{c c c c}
        \toprule
        \textbf{$n$} & \textbf{$r$} & \textbf{LSAP Partitioning} & \textbf{Meta-PPO} \\
        \midrule
        2 & 1 & 0.5001 & 4.4594\\
        5 & 2 & 0.3905 & 3.4927\\
        10 & 3 & 0.4464 & 4.9642\\
        20 & 6 & 0.3992 & 8.1406\\
        50 & 15 & 0.3956 & 17.4025\\
        100 & 30 & 0.8295 & 32.1637\\
        500 & 150 & 0.6516 & 182.0258\\
        1000 & 300 & 1.2324 & 290.1350\\
        \bottomrule
    \end{tabular}
    \caption{Time taken (in seconds), averaged over 10 runs, for running each step of the proposed approach for varying number of robots and repair technicians.}
    \label{tab:stepwise_complexity}
\end{table}
\begin{figure}[!htbp]
    \centering
    \includegraphics[width=\linewidth]{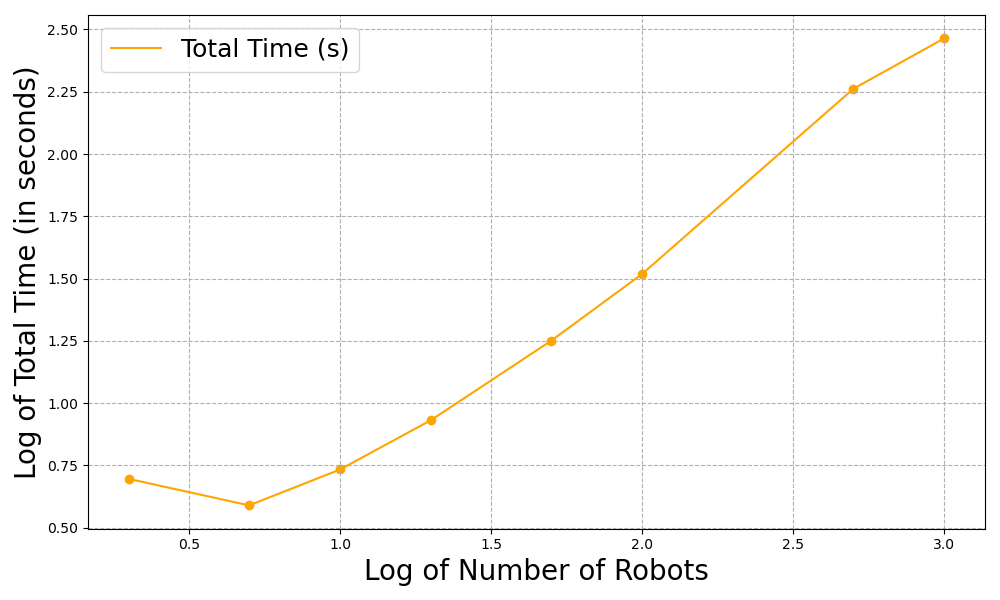}
    \caption{Log-log plot showing the computational complexity of the proposed approach for varying number of robots. $(n,r)$ values are the same as Table~\ref{tab:stepwise_complexity}.}
    \label{fig:complexity}
\end{figure}

\subsection{Computational Complexity Analysis}
Finally, we perform a computational complexity analysis to demonstrate the scalability of the proposed approach across different $(n,r)$ pairs.
The experiments were conducted in Python on a laptop running MacOS with an M2 chip @3.49GHz CPU and 8GB RAM. Table~\ref{tab:stepwise_complexity} summarizes the time taken (in seconds) for each step of the proposed approach for varying number of robots and repair technicians. We observe that the time taken for the LSAP-based partitioning step is negiligible in comparison to the time required for generating policies using the meta-PPO agent, especially as the number of robots increases. Since the second step involves applying a pre-trained meta-PPO model to each group, the time complexity for this step scales linearly with the number of robots in each group. Therefore, the overall complexity of our algorithm to expected to be linear in the number of robots, i.e., $O(n)$. This hypothesis is confirmed by the log-log plot shown in Figure~\ref{fig:complexity}, which depicts the computational complexity of the proposed approach as the number of robots increases. The trend in the log-log plot demonstrates the linear scalability of our algorithm with respect to the number of robots. Figure~\ref{fig:heatmap} presents a heatmap that captures the variation in computational time for different combinations of $(n,r)$ pairs. We observe that the time taken to run the proposed approach is more sensitive to changes in the number of robots than to changes in the number of repair technicians. This finding indicates that the overall computational complexity is dominated by the number of robots, with the impact of the number of repair technicians being relatively small.
\begin{figure}[h]
    \centering
    \includegraphics[width=\linewidth]{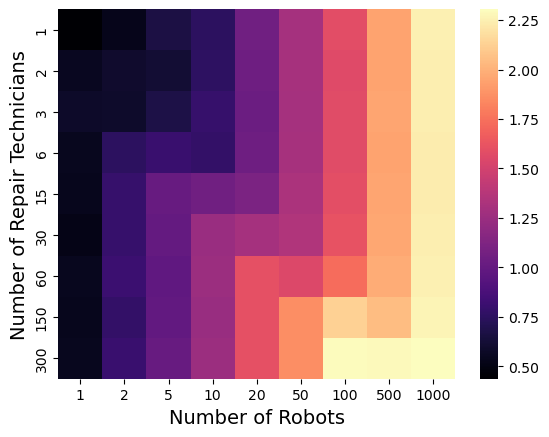}
    \caption{Heatmap showing the time taken (in seconds) for the proposed approach, across different numbers of robots and repair technicians. Values are plotted on a logarithmic scale to better capture the variation.}
    \label{fig:heatmap}
\end{figure}

\section{Conclusions}

We introduced a two-stage, partition-and-meta-RL algorithm for \emph{capacity- and budget-constrained multi-agent MDPs}.  
The key difficulty—an exponentially large joint action set induced by global budget and simultaneous-action limits—is overcome by (i) partitioning the \(n\)-agent system into \(r\) groups by solving a Linear Sum Asignment Problem (LSAP) that maximises transition-diversity, then (ii) training a single meta-PPO policy that adapts rapidly to each sub-MDP.  
A proportional budget split renders the sub-problems independent and solvable in parallel. Empirical results on a large-scale robot-maintenance benchmark show that the LSAP + meta-PPO pipeline consistently outperforms integer-programming, vanilla PPO, and several meta-heuristic baselines; the advantage widens with swarm size.  
Runtime grows near-linearly with \(n\), confirming that the method is suitable for large systems. From an operational standpoint, the method’s \emph{linear scalability} allows maintenance planners to scale from tens to thousands of robots without re-tuning; its near-uniform \emph{budget pacing} and consistently low technician usage translate into fewer overtime shifts and, consequently, higher line uptime and lower labour costs in real‐world factories. Future work will explore hierarchical budget structures and richer capacity models.

\bibliographystyle{IEEEtran}
\bibliography{references}

\appendix
\section{Appendix}

\subsection{Almost Sure Asymptotic Contraction of Partition Averages}\label{subsec:proof}


\begin{theorem}
\label{lem:partition_quality}
Let $X \in \mathbb{R}^{nk \times nk}$ be a symmetric random matrix where the entries $\{ X_{ij} \mid i \leq j \}$ are independently and identically distributed (i.i.d.) standard normal random variables, i.e., $X_{ij} \sim \mathcal{N}(0,1)$ for $i \leq j$, and $X_{ji} = X_{ij}$ for $i > j$. Let $\mathcal{G} = (V, E)$ be a complete graph with $nk$ nodes, where $V$ denotes the set of vertices and $E$ denotes the set of edges, and the adjacency matrix is given by $X$. 

Partition the node set $V$ into $n$ disjoint subsets $V_1, V_2, \dots, V_n$, each of size $k$. For each $t \in \{1,2,\dots,n\}$, let $E_t$ be the edges in the subgraph induced by $V_t$. Let $S_t$ denote the average of the edge weights within subgraph $V_t$. 

Then, for any fixed $\epsilon > 0$, if $k = \omega(\ln n)$ (i.e., $k$ grows faster than $\ln n$), the probability that there exist $t \neq l$ such that $|S_t - S_l| > \epsilon$ tends to zero as $n \to \infty$. That is,
\begin{equation}
    \lim_{n \to \infty} \max_{V_1, V_2, \cdots, V_n} P\left( \exists\, t \neq l \text{ such that } |S_t - S_l| > \epsilon \right) = 0.
\end{equation}
\end{theorem}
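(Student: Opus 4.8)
The strategy is a union bound over all pairs $(t,\ell)$ combined with a Gaussian tail estimate for each single difference $S_t - S_\ell$. First I would observe that each $S_t$ is an average of $\binom{k}{2}$ i.i.d.\ standard normal edge weights, so $S_t \sim \mathcal{N}\!\left(0, \tfrac{1}{\binom{k}{2}}\right)$. Since the subsets $V_t, V_\ell$ are disjoint, the induced edge sets $E_t$ and $E_\ell$ are disjoint, hence $S_t$ and $S_\ell$ are independent; therefore $S_t - S_\ell \sim \mathcal{N}\!\left(0, \tfrac{2}{\binom{k}{2}}\right)$, with variance $\sigma_k^2 = \tfrac{2}{\binom{k}{2}} = \tfrac{4}{k(k-1)}$.

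Next I would apply the standard Gaussian tail bound: for $Z \sim \mathcal{N}(0,\sigma_k^2)$, $P(|Z| > \epsilon) \le 2\exp\!\left(-\tfrac{\epsilon^2}{2\sigma_k^2}\right) = 2\exp\!\left(-\tfrac{\epsilon^2 k(k-1)}{8}\right)$. Then a union bound over the $\binom{n}{2} < n^2$ unordered pairs gives
\begin{equation}
P\left( \exists\, t \neq \ell : |S_t - S_\ell| > \epsilon \right) \;\le\; 2\binom{n}{2}\exp\!\left(-\frac{\epsilon^2 k(k-1)}{8}\right) \;<\; 2n^2\exp\!\left(-\frac{\epsilon^2 k(k-1)}{8}\right).
\end{equation}
Taking logarithms, this upper bound tends to $0$ provided $2\ln n - \tfrac{\epsilon^2 k(k-1)}{8} \to -\infty$, i.e.\ provided $k(k-1) = \omega(\ln n)$, which is implied by $k = \omega(\ln n)$ (indeed $k = \omega(\sqrt{\ln n})$ would already suffice, so the hypothesis is comfortably strong enough). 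Since this bound does not depend on which balanced partition $V_1,\dots,V_n$ we chose, the same estimate bounds the maximum over all partitions, giving the claimed limit.

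The argument is essentially routine; the only points requiring a little care are (i) justifying the independence of $S_t$ and $S_\ell$ from the disjointness of the induced edge sets together with the independence of the entries $\{X_{ij} : i \le j\}$, and (ii) making sure the union bound is taken over partitions correctly — here it is trivial because the per-pair tail bound is uniform in the partition, so $\max$ over partitions commutes harmlessly with the estimate. If one instead wanted an almost-sure statement (as the appendix title suggests) rather than convergence in probability, the main obstacle would be summability: one would fix a sequence of partitions, note $\sum_n 2n^2\exp(-\epsilon^2 k_n(k_n-1)/8) < \infty$ whenever $k_n$ grows fast enough (e.g.\ $k_n \ge c\sqrt{\ln n}$ with $c$ large, or any $k_n = \omega(\ln n)$), and invoke Borel–Cantelli; I would present the in-probability version as the theorem states it and remark that Borel–Cantelli upgrades it to almost sure convergence along any fixed partition sequence.
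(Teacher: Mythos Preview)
Your argument is correct for the theorem as literally stated, and it is tighter than the paper's. Both proofs follow the same skeleton---identify $S_t\sim\mathcal N\bigl(0,\tfrac{2}{k(k-1)}\bigr)$, apply a Gaussian tail bound to $|S_t-S_l|$, then union bound---but they diverge at the union-bound step. You sum over the $\binom{n}{2}<n^2$ pairs within a \emph{single} partition and then observe that the resulting estimate is uniform in the partition, so the outer $\max$ is free. The paper instead union-bounds over all $\binom{nk}{k}^2\le\exp\bigl(2k\ln(en)\bigr)$ ordered pairs of size-$k$ subsets of $V$, and it is this much larger count that forces the hypothesis $k=\omega(\ln n)$; your bound, as you correctly remark, already works under $k=\omega(\sqrt{\ln n})$.

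The substantive difference is what each approach buys. Your estimate exploits that in the displayed statement the $\max$ over partitions sits \emph{outside} the probability, i.e., the partition is fixed before the randomness is revealed. The paper's larger union bound is exactly what one needs if the $\max$ were \emph{inside} $P$---equivalently, if the partition may be chosen adaptively as a function of the random matrix $X$. That data-dependent case is the one actually relevant to the paper's application (the LSAP rule selects the partition after looking at pairwise distances), and for it your uniformity-of-the-bound argument does not suffice. So: for the theorem as written your proof is correct and sharper; if you also want to cover $X$-measurable partitions, you would need to enlarge the union bound as the paper does, and then $k=\omega(\ln n)$ is the right growth rate.
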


\begin{proof}
The number of edges in $E_t$ is
\begin{equation}
    |E_t| = \binom{k}{2} = \frac{k(k - 1)}{2}.
\end{equation}
Also, since $S_t$ is the average of the edge weights within the subgraph induced by $V_t$, we have:
\begin{equation}
    S_t = \frac{1}{|E_t|} \sum_{(i,j) \in E_t} X_{ij} = \frac{1}{\binom{k}{2}}\sum_{(i,j)\in E_t}X_{ij}. \label{eq:st}
\end{equation}
For each $t$, $S_t$ is the average of $|E_t| = \binom{k}{2}$ i.i.d.\ standard normal random variables $X_{ij} \sim \mathcal{N}(0,1)$. Therefore, $S_t$ is normally distributed with mean zero and variance
\begin{equation}
    \sigma^2 = \frac{1}{|E_t|} = \frac{2}{k(k - 1)}.
\end{equation}
Similarly, the difference between any two such averages $S_t$ and $S_l$ (for $t \neq l$) is also normally distributed with mean zero and variance $2\sigma^2$:
\begin{equation}
    S_t - S_l \sim \mathcal{N}\left( 0, 2\sigma^2 \right).
\end{equation}
Using the Chernoff bound we have:
\begin{equation}
    \begin{split}
        P\left( |S_t - S_l| > \epsilon \right) &\leq P\left(|S_t| > \frac{\epsilon}{2}\right) + P\left(|S_l| > \frac{\epsilon}{2}\right) \\
        &\leq 4\exp\left(-\frac{\epsilon^2k(k-1)}{32}\right).
    \end{split}
\end{equation}
There are at most $\binom{nk}{k}$ possible choices for $S_t$. Using the upper bound on the binomial coefficient, we have:
\begin{equation*}
    \binom{nk}{k} \leq \left(\frac{enk}{k}\right)^k = \exp(k \ln (en)).
\end{equation*}
Therefore, the total number of $(S_t, S_l)$ pairs is at most
\begin{equation}
     \binom{nk}{k}^2 \leq \exp\left(2 k \ln (e n)\right).
\end{equation}
Applying the union bound over all partitions and all pairs, we obtain
\begin{equation}\label{eq:simple}
\begin{split}
\max_{V_1,\dots,V_n} P\!\bigl(\exists\,t\neq l:\,|S_t-S_l|>\epsilon\bigr)
&\le 4\,\binom{nk}{k}^{\!2} e^{-\frac{\epsilon^{2}k(k-1)}{32}}\\
&\le 4\,e^{\,2k(1+\ln n)}\,e^{-\frac{\epsilon^{2}k(k-1)}{32}}.
\end{split}
\end{equation}

The exponent in \eqref{eq:simple} tends to $-\infty$ if $k$ grows faster than $\ln n$, i.e., $k = \omega(\ln n)$. This means that the right-hand side (RHS) of \eqref{eq:simple} goes to 0.

Thus, if $k$ grows faster than $\ln n$, the probability that any two group averages $S_t$ and $S_l$ differ by more than $\epsilon$ tends to zero. This implies that, asymptotically, all $S_t$ are approximately equal with high probability regardless of how the partitions $V_t$ are formed. 
\end{proof}

\end{document}